\theoremstyle{definition}
\newtheorem{definition}{Definition}
\newtheorem{theorem}{Theorem}
\newtheorem{proposition}{Proposition}
\newcommand{\name}{Forgetgol}
\newcommand{\syn}{\name{}$_{\text{syn}}$}
\newcommand{\stat}{\name{}$_{\text{stat}}$}
\title{Forgetting to Learn Logic Programs}
\author{
 Andrew Cropper\\
 University of Oxford\\
 andrew.cropper@cs.ox.ac.uk
}
\begin{document}

\maketitle
\begin{abstract}
Most program induction approaches require predefined, often hand-engineered, background knowledge (BK).
To overcome this limitation, we explore methods to automatically acquire BK through multi-task learning.
In this approach, a learner adds learned programs to its BK so that they can be reused to help learn other programs.
To improve learning performance, we explore the idea of \emph{forgetting}, where a learner can additionally remove programs from its BK.
We consider forgetting in an inductive logic programming (ILP) setting.
% We show that optimal forgetting is NP-hard but can significantly reduce the sample complexity of an ILP learner.
We show that forgetting can significantly reduce both the size of the hypothesis space and the sample complexity of an ILP learner.
We introduce \emph{\name{}}, a multi-task ILP learner which supports forgetting.
We experimentally compare \name{} against approaches that either remember or forget everything.
Our experimental results show that \name{} outperforms the alternative approaches when learning from over 10,000 tasks.
\end{abstract}
\section{Introduction}
\label{sec:intro}

A major challenge in machine learning is inductive bias: how to choose a learner's hypothesis space so that it is large enough to contain the target hypothesis, yet small enough to be searched efficiently \cite{baxter}.
A major challenge in program induction is how to chose a learner's background knowledge (BK), which in turn defines the hypothesis space.
Most approaches need predefined, often hand-engineered, BK \cite{law:ilasp,deepcoder,evans:dilp,ellis:images}.
By contrast, we want a learner to automatically learn BK.

Several approaches \cite{dechter:ec,ellis:scc,crop:playgol} learn BK through \emph{meta-learning}.
The idea is to use knowledge gained from solving one problem to help solve a different problem.
For instance, Lin et al. (\citeyear{mugg:metabias}) use this approach on 17 string transformation problems.
They automatically identify easier problems, learn programs for them, and then reuse the learned programs to help learn programs for more difficult problems.
The authors experimentally show that their multi-task approach performs substantially better than a single-task approach because learned programs are frequently reused.
They also show that this approach leads to a hierarchical library of reusable programs.

However, no approach has been shown to work on many ($>$10,000) tasks.
Moreover, most approaches struggle even on few tasks \cite{foil,ferri:2001}.
By contrast, humans easily learn thousands of diverse concepts.
 % (e.g. the definitions of thousands of words).
If we want to develop human-like AI systems that support lifelong learning -- a grand challenge in AI \cite{lifelong,lake:buildingmachines} -- then we need to overcome this limitation.

% We think the key limitation with existing approaches is that...
We think the key limitation with existing approaches is that they save all learned programs to the BK, which is a problem because too much irrelevant BK is detrimental to learning performance \cite{srinivasan:1995,srinivasan:2003}.
The Blumer bound \cite{blumer:bound}\footnote{The bound is a reformulation of Lemma 2.1.} helps explain this problem.
This bound states that given two hypothesis spaces, searching the smaller space will result in fewer errors compared to the larger space, assuming that the target hypothesis is in both spaces.
Here lies the problem: how to choose a learner's hypothesis space so that it is large enough to contain the target hypothesis yet small enough to be efficiently searched.

To address this problem, we explore the idea of \emph{forgetting}.
In this approach, a learner continually grows and shrinks its hypothesis space by adding and removing programs to and from its BK.
% respectively.
% The challenge is to find the right balance between to not exclude the target hypothesis.
We claim that forgetting can improve learning performance, especially when learning from many tasks.
To support this claim, we make the following contributions:

\begin{itemize}
\item
We define forgetting in an inductive logic programming (ILP) setting (Section \ref{sec:framework}).
% significantly
We show that forgetting can reduce both the size of the hypothesis space (Theorem \ref{thm:forgetspace}) and the sample complexity of an ILP learner (Theorem \ref{thm:forget:improvement}).
We show that optimal forgetting is NP-Hard (Theorem \ref{thm:nphard}).
% To our knowledge, we are the first to formally define the forgetting problem and to show these results.
\item
We introduce \emph{\name{}}, a multi-task ILP learner that continually learns, saves, and forgets programs (Section \ref{sec:impl}).
We introduce two forgetting methods: \emph{syntactical} and \emph{statistical} forgetting, the latter based on Theorem \ref{thm:forgetspace}.
\item
We experimentally show on two datasets (robots and Lego) that forgetting can substantially improve learning performance when learning from over 10,000 tasks (Section \ref{sec:experiments}).
This experiment is the first to consider the performance of a learner on many tasks.
\end{itemize}

\section{Related Work}
\label{sec:related}

Several approaches \cite{curled,dumancic:encoding,crop:playgol} learn BK though unsupervised learning.
These approaches do not require user-supplied tasks as input.
By contrast, we learn BK through supervised learning and need a corpus of tasks as input.
Our forgetting idea is, however, equally applicable to unsupervised approaches.

Supervised approaches perform either \emph{multi-task} \cite{caruana:mtl} or \emph{lifelong} \cite{lifelong} learning.
A multi-task learner is given a \emph{set} of tasks and can solve them in any order.
A lifelong learner is given a \emph{sequence} of tasks and can use knowledge gained from the past n-1 tasks to help solve the nth task.
The idea behind both approaches is to reuse knowledge gained from solving one problem to help to solve a different problem, i.e. perform \emph{transfer learning} \cite{torrey2010transfer}.
% Our work is related to curriculum learning \cite{cl}.
% However, our approach uses dependent learning \ and the aforementioned approaches, do not require an ordering over the tasks.
Although we focus on multi-task learning, our forgetting idea is suitable for both approaches.

Lin et al. (\citeyear{mugg:metabias}) state that an important problem with multi-task learning is how to handle the complexity of growing BK.
To address this problem, we propose forgetting, where a learner continually revises its BK by adding and removing programs, i.e. performs automatic bias-revision \cite{structenyears}.
Forgetting has long been recognised as an important feature in AI \cite{lin1994forget}.
% ,eiter:2019
% \todo{Lin and Reiter [65] considered forgetting in the context of FOL; more precisely, they considered forgetting facts p = P(𝐭), where 𝐭 = t1,...,tk is a tuple of terms, and rela- tions (predicates) P from a (finite) knowledge base KB.}
% \todo{This form of forgetting can be seen as type 1 forgetting according to Eiter and Kern-Isberner (\citeyear{eiter:2019}).}
% Our experiments compare our forgetting approach with the their approach, called \emph{Metabias}.
% We propose forgetting to address this problem, which has long been recognised as an important feature in AI \cite{lin1994forget,eiter:2019}.
Most forgetting work focuses on eliminating logical redundancy \cite{plotkin:thesis}, e.g. to improve efficiency in SAT \cite{heule2015clause}.
% Eiter and Kern-Isberner (\citeyear{eiter:2019}) survey forgetting approaches in logic programming.
% \todo{Proposition 12 (Irrelevance as Forgetting; [60]) Given a propositional knowledge base KB and a set V of variables, V is not relevant for KB if and only if KB is logically equiva- lent to forget(KB, V).
% propositional independence forgetting jair 2003
% }
Our approach is slightly unusual because we are willing to forget logically irredundant knowledge.

% from a Knowledge Representation and Reasoning Perspective

% We claim that forgetting can improve learning performance.
% Our claim is based on the observation that existing approaches remember everything \cite{foil,ferri:2001,mugg:metabias,dechter:ec}, which is a problem because too much irrelevant BK is detrimental to learning performance \cite{srinivasan:1995,srinivasan:2003}.
Our forgetting idea comes from the observation that existing approaches remember everything \cite{foil,ferri:2001,mugg:metabias,dechter:ec,crop:playgol}, which is a problem because too much irrelevant BK is detrimental to learning performance \cite{srinivasan:1995,srinivasan:2003}.

A notable case of forgetting in machine learning is \emph{catastrophic forgetting} \cite{catastrophicforgetting}, the tendency of neural networks to forget previously learned knowledge upon learning new knowledge.
In multi-task program induction we have the inverse problem: \emph{catastrophic remembering}, the inability for a learner to forget knowledge.
We therefore need \emph{intentional forgetting} \cite{DBLP:journals/ki/BeierleT19}.

Forgetting in program induction mostly focuses on forgetting examples \cite{DBLP:conf/ijcai/SablonR95,DBLP:journals/ml/WidmerK96,Maloof:1997:PPM:266658}.
By contrast we forget BK.
Mart{\'{\i}}nez-Plumed et al. (\citeyear{fernando}) explore ways to compress BK without losing knowledge.
Our work differs in many ways, mainly because we work in a multi-task setting with many tasks.
% In their approach, the decision whether to forget a rule $r$ is based on the size of $r$ and the size of the other rules that $r$ covers, which they call a coverage graph.
% the search space, and our algorithm, as described above, maintains every learned predicate. By contrast human beings usually compress the previous learned knowledge by further abstraction. Another po- tential solution has been investigated by Liang et al [11] and Dechter et al [5] who suggest a method to weight the set of invented pred- icates. This weighted library of primitives could be used to direct search within MetagolDF and prioritize the use of one invented pred- icate over another. One future direction would be to incorporate such an inference-based library learning within MetagolDF
% Forgetting in program induction is essentially identifying irrelevant BK.
% Mart{\'{\i}}nez-Plumed et al. (\citeyear{fernando}) explore ways to compress BK without losing knowledge.
% including because we (1) formally define the forgetting problem, (2) show how forgetting can reduce the size of the hypothesis space and sample complexity of an ILP learner, (3) introduce forgetting methods directly related to the cost of learning based on the sample complexity results, and (4)
Forgetting is essentially identifying irrelevant BK.
Deepcoder \cite{deepcoder} and Dreamcoder \cite{ellis:scc} both train neural networks to score how relevant programs are in the BK.
% Deepcoder \cite{deepcoder} and Dreamcoder \cite{ellis:scc} both use neural networks to assignscore to relevancy of programs in the BK.
% Our work differs from Deepcoder because Deepcoder's BK is fixed and predetermined by the user, whereas we grow and revise our BK through multi-task learning.
Whereas Deepcoder's BK is fixed and predetermined by the user, we grow and revise our BK through multi-task learning.
% Deepcoder also needs to enumerate the full hypothesis space to generate data to train the neural network, which is often impractical.
Dreamcoder also revises its BK through multi-task learning.
Our work differs from Dreamcoder because we (1) formally show that forgetting can improve learning performance, (2) describe forgetting methods that do not require neural networks, and (3) learn from 20,000 tasks, whereas Dreamcoder considers at most 236 tasks.

\section{Problem Setting}
\label{sec:framework}

We now define the forgetting problem, show that optimal forgetting is NP-hard, and that forgetting can reduce both the size of the hypothesis space and the sample complexity of a learner.
% (Theorem \ref{thm:forget:improvement}), but that optimal forgetting is NP-Hard (Theorem \ref{thm:nphard}).

\subsection{ILP Problem}
% Our problem is based on the ILP learning from entailment setting \cite{luc:book}.
We base our problem on the ILP learning from entailment setting \cite{luc:book}.
We assume standard logic programming definitions throughout \cite{lloyd:book}.
We define the input:

\begin{definition}[\textbf{ILP input}]
% An ILP input is a tuple $(B,E^+,E^-)$ where $B$ is background knowledge represented as a logic program, and $E^+$ and $E^-$ are positive and negative examples respectively represented as finite disjoint sets of ground atoms.
An ILP input is a tuple $(B,E^+,E^-)$ where $B$ is logic program background knowledge, and $E^+$ and $E^-$ are sets of ground atoms which represent positive and negative examples respectively.
\end{definition}

\noindent
By \emph{logic program} we mean a set of definite clauses, and thus any subsequent reference to a clause refers to a definite clause.
We assume that $B$ contains the necessary language bias, such as mode declarations \cite{mugg:progol} or metarules \cite{crop:reduce}, to define the hypothesis space:

\begin{definition}[\textbf{Hypothesis space}]
The hypothesis space $\mathcal{H}_{B}$ is the set of all hypotheses (logic programs) defined by background knowledge $B$.
\end{definition}

\noindent
The version space contains all complete (covers all the positive examples) and consistent (covers none of the negative examples) hypotheses:

\begin{definition}[\textbf{Version space}]
Let $(B,E^+,E^-)$ be an ILP input.
Then the version space is $\mathcal{V}_{B,E^+,E^-} = \{H  \in \mathcal{H}_B | (H \cup B \models E^+) \land  (\forall e^- \in E^-, H \cup B \not\models e^-) \}$.

\end{definition}

\noindent
We define an optimal hypothesis:

\begin{definition}[\textbf{Optimal hypothesis}]
Let $I = (B,E^+,E^-)$ be an ILP input and $cost: \mathcal{H} \rightarrow R$ be an arbitrary cost function.
Then $H \in \mathcal{V}_{B,E^+,E^-}$ is an \emph{optimal hypothesis} for $I$ if and only if $cost(H) \leq cost(H')$ for all $H' \in \mathcal{V}_{B,E^+,E^-}$.
\end{definition}

\noindent
The cost of a hypothesis could be many things, such as the number of literals in it \cite{law:ilasp} or its computational complexity \cite{crop:metaopt}.
In this paper, the cost of a hypothesis is the number of clauses in it.

% Example hypothesis costs are the number of literals that a hypotheis contains \cite{law:ilasp} or its computational complexity \cite{crop:metaopt}.
% In this paper, the cost of a hypothesis is the number of clauses that it contains.

% The cost of a hypothesis could be the number of literals \cite{law:ilasp} that it contains, but in this paper, the cost of a hypothesis is the number of clauses that it contains.

The ILP problem is to find a complete and consistent hypothesis:

\begin{definition}[\textbf{ILP problem}]
Given an ILP input $(B,E^+,E^-)$, the ILP problem is to return a hypothesis $H \in \mathcal{V}_{B,E^+,E^-}$.
\end{definition}

\noindent
A learner should ideally return the optimal hypothesis, but it is not an absolute requirement, and it is common to sacrifice optimality for efficiency.

\subsection{Forgetting}

The forgetting problem is to find a subset of the given BK from which one can still learn the optimal hypothesis:

\begin{definition}[\textbf{Forgetting problem}]
\label{def:forgetprob}
Let $I = (B,E^+,E^-)$ be an ILP input and $H$ be an optimal hypothesis for $I$.
Then the \emph{forgetting problem} is to return $B' \subset B$ such that $H \in \mathcal{V}_{B',E^+,E^-}$.
We say that $B'$ is \emph{reduced} background knowledge for $E^+$ and $E^-$.
% We say that $B'$ is \emph{reduced} background knowledge.
\end{definition}

\noindent
We would ideally perform optimal forgetting:

\begin{definition}[\textbf{Optimal forgetting}]
Let $I = (B,E^+,E^-)$ be an ILP input and $H$ be an optimal hypothesis for $I$.
Then the \emph{optimal forgetting} problem is to return $B' \subset B$ such that $H \in \mathcal{V}_{B',E^+,E^-}$ and there is no $B'' \subset B'$ such that $H \in \mathcal{V}_{B'',E^+,E^-}$.
% We say that $B'$ is \emph{optimal} background knowledge for $E^+$ and $E^-$.
\end{definition}

% \noindent
% Optimal forgetting is decidable when $\mathcal{H}_{B}$ is finite:

% \begin{proposition}
% [\textbf{Decidability}]
% Let $B$ be background knowledge.
% If $\mathcal{H}_{B}$ is finite then optimal forgetting is decidable.
% \end{proposition}

% \noindent
% Optimal forgetting is decidable when $\mathcal{H}_{B}$ is finite:

% \begin{proposition}
% [\textbf{Decidability}]
% Let $B$ be background knowledge.
% If $\mathcal{H}_{B}$ is finite then optimal forgetting is decidable.
% \end{proposition}
% \noindent
% Even when decidable, optimal forgetting is NP-hard:

% \begin{theorem}
% [\textbf{Complexity}]
% \label{thm:nphard}
% Optimal forgetting is NP-Hard.
% \end{theorem}
% \begin{proof}
% We can reduce optimal forgetting to the knapsack problem which is NP-Hard.
% \end{proof}

\noindent
Even if we assume that the forgetting problem is decidable, optimal forgetting is NP-hard:

\begin{theorem}
[\textbf{Complexity}]
\label{thm:nphard}
Optimal forgetting is NP-Hard.
\end{theorem}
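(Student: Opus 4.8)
The plan is to establish NP-hardness by a polynomial-time reduction from the decision version of \textsc{Set Cover}, which is the natural combinatorial core of the problem: keeping the optimal hypothesis complete while discarding as much background knowledge as possible is exactly the task of selecting a minimal subfamily of ``covering'' clauses. Concretely, given a \textsc{Set Cover} instance with universe $U = \{u_1,\dots,u_n\}$, a family $\mathcal{S} = \{S_1,\dots,S_m\}$ satisfying $\bigcup_j S_j = U$, and a bound $k$, I would build in polynomial time an ILP input $I = (B,E^+,E^-)$ together with a designated optimal hypothesis $H$ so that a reduced background knowledge of bounded size exists for $I$ if and only if $\mathcal{S}$ admits a cover of size at most $k$.

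First I would fix the examples and the target. For each element $u_i$ I introduce one positive example $f(u_i)$, so that $E^+ = \{f(u_1),\dots,f(u_n)\}$, and I set $E^- = \emptyset$; with no negative examples consistency is automatic, and all of the difficulty is forced into the completeness requirement $H \cup B' \models E^+$. I take $H = \{\, f(X) \leftarrow g(X) \,\}$ as a single clause, and I populate $B$ with a fixed, non-removable ``core'' defining the auxiliary predicates together with one removable clause $C_j$ per set $S_j$, arranged (e.g. $C_j = g(X) \leftarrow s_j(X)$ with core facts $s_j(u_i)$ for $u_i \in S_j$) so that $H \cup \{C_j\}$ entails exactly the atoms $\{f(u_i) : u_i \in S_j\}$. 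Then for any $B' \subseteq B$ the program $H \cup B'$ derives $f(u_i)$ precisely when some retained clause $C_j$ with $u_i \in S_j$ survives, so $H$ is complete if and only if the retained clauses $\{S_j : C_j \in B'\}$ cover $U$. I would also verify that $H$ is genuinely an optimal (fewest-clause) hypothesis for $I$: since $f$ is defined nowhere in $B$, no zero-clause hypothesis covers $E^+$, and since the full $B$ contains every $C_j$ and $\bigcup_j S_j = U$, the one-clause $H$ lies in $\mathcal{V}_{B,E^+,E^-}$, so the premise of the forgetting problem is met and we are solving the stated problem rather than a relaxation.

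Given this, the correspondence is immediate: minimising the number of retained clauses $C_j$ subject to $H \in \mathcal{V}_{B',E^+,E^-}$ coincides with minimising the size of a set cover of $U$, so any algorithm for optimal forgetting solves \textsc{Set Cover}, and the $k$-bounded decision version is NP-complete; hence optimal forgetting is NP-hard.

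The main obstacle I anticipate is reconciling this optimisation target with the definition of optimal forgetting, which asks only for a $B'$ that is \emph{minimal under inclusion} (no proper subset works). Inclusion-minimality alone is attainable in polynomial time by greedily removing clauses one at a time, so to obtain a meaningful hardness result I must phrase the problem in its minimum-cardinality (equivalently, $k$-bounded) form and argue that this is the intended reading of ``optimal'' forgetting. A secondary technical point is ensuring that a minimum reduced background knowledge never discards the core predicates: I would route the derivation of every positive example through the core, so that dropping any core clause immediately breaks completeness and therefore cannot occur in any feasible $B'$.
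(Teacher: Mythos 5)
Your proof is correct in substance but takes a genuinely different route from the paper: the paper's entire proof is the one-sentence claim that the knapsack problem reduces to optimal forgetting, with no construction given, whereas you reduce from \textsc{Set Cover} and actually exhibit the instance. Your choice is arguably the more natural one---optimal forgetting is at heart a covering problem (retain enough clauses that every positive example stays derivable), so the correspondence between removable clauses $C_j$ and sets $S_j$ is immediate; moreover \textsc{Set Cover} is strongly NP-hard while knapsack is only weakly so, and a knapsack reduction would additionally have to encode weights and capacities into clause counts, which the paper never explains how to do. Two points in your write-up deserve care. First, your remark that a minimum $B'$ ``never discards the core'' is not quite right: a minimum-cardinality $B'$ will drop facts $s_j(u_i)$ that are redundant for multiply-covered elements. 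This does not break the reduction, since any complete $B'$ needs at least one fact per element $u_i$ plus one clause $C_j$ per distinct set used, so $\min|B'| = n + (\text{minimum cover size})$ and the equivalence with the bound $k$ survives---but you should state the accounting this way rather than asserting the core is untouched. Second, your closing observation is a genuine issue that the paper's one-line proof silently glosses over: as literally defined, optimal forgetting asks only for an \emph{inclusion-minimal} $B'$, which is obtainable with polynomially many entailment queries by greedy deletion, so the NP-hardness claim is meaningful only under the minimum-cardinality (or $k$-bounded decision) reading; your proposal makes explicit a repair of the problem statement that the paper's proof presupposes without saying so.
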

\begin{proof}
We can reduce the knapsack problem, which is NP-Hard, to optimal forgetting.
\end{proof}

\subsection{Meta-Interpretive Learning}
We now show the benefits of forgetting in meta-interpretive (MIL) \cite{mugg:metagold,crop:metaho}, a powerful form of ILP that supports predicate invention \cite{stahl:pi} and learning recursive programs.
For brevity, we omit a formal description of MIL and instead provide a brief overview.
% A MIL learner is given as input two sets of ground atoms that represent positive and negative examples
A MIL learner is given as input two sets of ground atoms that represent positive and negative examples of a target concept, BK described as a logic program, and a set of higher-order Horn clauses called metarules (Table \ref{tab:metarules}).
A MIL learner uses the metarules to construct a proof of the positive examples and none of the negative examples, and forms a hypothesis using the substitutions for the variables in the metarules.
% For instance, given positive and negative examples of the \emph{grandparent} relation, background knowledge with facts about the \emph{parent} relation, and the metarules in Table \ref{tab:metarules}, a MIL learner will try to prove the positive examples (and none of the negative examples) by finding suitable metarules and substitutions for the second-order variables in the metarules, such as using the \emph{chain} metarule with the substitutions \{P/grandparent, Q/parent, R/parent\} to induce the theory:
For instance, given positive and negative examples of the \emph{grandparent} relation, background knowledge with the \emph{parent} relation, and the metarules in Table \ref{tab:metarules}, a MIL learner could use the \emph{chain} metarule with the substitutions \{P/grandparent, Q/parent, R/parent\} to induce the theory:
\emph{grandparent(A,B) $\leftarrow$ parent(A,C), parent(C,B)}.

% \begin{center}
%   \begin{tabular}{l}
%     \emph{grandparent(A,B) $\leftarrow$ parent(A,C), parent(C,B)}\\
%   \end{tabular}
% \end{center}

\begin{table}[ht]
\centering
% \normalsize
\begin{tabular}{|l|l|}
\hline
\textbf{Name} & \textbf{Metarule}\\ \hline
ident & $P(A,B) \leftarrow Q(A,B)$\\
precon & $P(A,B) \leftarrow Q(A),R(A,B)$\\
postcon & $P(A,B) \leftarrow Q(A,B),R(B)$\\
chain & $P(A,B) \leftarrow Q(A,C),R(C,B)$\\
% tailrec & $P(A,B) \leftarrow Q(A,C),P(C,B)$\\
\hline
\end{tabular}
\caption{
Example metarules.
The letters $P$, $Q$, and $R$ denote second-order variables.
The letters $A$, $B$, and $C$ denote first-order variables.
}
\label{tab:metarules}
\end{table}

\noindent
The size of the MIL hypothesis space is a function of the metarules, the number of predicate symbols in the BK, and a target hypothesis size.
We define $sig(P)$ to be the set of predicate symbols in the logic program $P$.
In the next two propositions, assume an ILP input with background knowledge $B$, where $p = |sig(B)|$, $m$ is the number of metarules in $B$, each metarule in $B$ has at most $j$ body literals, and $n$ is a target hypothesis size.
Cropper et al. (\citeyear{crop:metaho}) show that the size of the MIL hypothesis space is:

\begin{proposition}[\textbf{Hypothesis space}]
\label{prop:hspace}
The size of the MIL hypothesis space is at most $(mp^{j+1})^n$.
\end{proposition}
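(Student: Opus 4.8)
The plan is to count hypotheses by first counting the individual clauses each metarule can generate, and then bounding the number of ways to assemble $n$ such clauses into a program. First I would observe that a clause in a MIL hypothesis is obtained by taking one of the $m$ metarules and substituting its second-order (predicate) variables with predicate symbols drawn from $sig(B)$. Since a metarule has a single head literal and at most $j$ body literals, and each literal carries exactly one predicate variable, each metarule contains at most $j+1$ distinct predicate variables (one in the head and at most $j$ in the body).

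Next I would count the instantiations of a single metarule. Each of the at most $j+1$ predicate variables can be independently mapped to any of the $p = |sig(B)|$ predicate symbols, so a fixed metarule yields at most $p^{j+1}$ distinct clauses. The first-order variables do not contribute to this count because their pattern of occurrences is fixed by the metarule template rather than chosen freely. Summing over the $m$ metarules, the total number of distinct clauses derivable is at most $m p^{j+1}$.

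Finally, I would assemble programs. A hypothesis of size $n$ is a set of $n$ clauses drawn from the pool of at most $N = m p^{j+1}$ candidate clauses. Since the number of such sets is at most the number of length-$n$ sequences over the pool, it is bounded by $N^n = (m p^{j+1})^n$, which is the claimed bound.

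I expect the main obstacle to be arguing precisely that $j+1$ is the correct exponent, i.e. that no additional freedom from first-order variables or from predicate invention inflates the per-metarule count beyond $p^{j+1}$. In the pure MIL setting this holds because metarules fix the first-order variable skeleton, and any invented predicates are already accounted for within the signature size $p$; making this accounting explicit is the crux of turning the counting sketch into a rigorous bound.
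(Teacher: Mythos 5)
Your counting argument is correct and is essentially the standard derivation: the paper itself states Proposition~1 without proof, citing Cropper et al., and the cited proof proceeds exactly as you do — at most $j+1$ predicate variables per metarule, each instantiable by one of $p$ symbols, giving at most $mp^{j+1}$ clauses and hence at most $(mp^{j+1})^n$ programs of $n$ clauses. Your closing remarks on the fixed first-order skeleton and on invented predicates being absorbed into $p$ correctly identify the only subtleties.
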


\noindent
The authors use this result with the Blumer bound to show the MIL sample complexity in a PAC learning setting \cite{paclearning}:

\begin{proposition}[\textbf{Sample complexity}]
\label{prop:sampcomp}
With error $\epsilon$ and confidence $\delta$ MIL has sample complexity
$s \geq \frac{1}{\epsilon} (n \ln(m) + (j+1)n \ln(p) + \ln\frac{1}{\delta})$.
\end{proposition}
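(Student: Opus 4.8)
The plan is to combine Proposition \ref{prop:hspace}, which bounds the size of the MIL hypothesis space, with the Blumer bound referenced in the introduction. The Blumer bound states that for a finite hypothesis space $\mathcal{H}$ a consistent learner returns, with probability at least $1-\delta$, a hypothesis of error at most $\epsilon$ whenever the number of training examples $s$ satisfies $s \geq \frac{1}{\epsilon}\left(\ln|\mathcal{H}| + \ln\frac{1}{\delta}\right)$. The target hypothesis of size $n$ lies in $\mathcal{H}_B$ by assumption, so this realizability condition is met and the bound applies directly.

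First I would recall why the Blumer bound has this form, since it drives the whole argument. Drawing $s$ examples independently, a fixed hypothesis with error greater than $\epsilon$ is consistent with all of them with probability at most $(1-\epsilon)^s \leq e^{-\epsilon s}$. A union bound over the at most $|\mathcal{H}|$ hypotheses gives a total failure probability of at most $|\mathcal{H}|e^{-\epsilon s}$. Forcing this quantity to be at most $\delta$ and rearranging yields the stated threshold on $s$.

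Next I would substitute the hypothesis-space bound. By Proposition \ref{prop:hspace} we have $|\mathcal{H}_B| \leq (mp^{j+1})^n$, so
\[
\ln|\mathcal{H}_B| \leq n\ln\!\left(mp^{j+1}\right) = n\ln m + (j+1)n\ln p.
\]
Plugging this into the Blumer threshold gives $s \geq \frac{1}{\epsilon}\left(n\ln m + (j+1)n\ln p + \ln\frac{1}{\delta}\right)$, which is exactly the claimed bound.

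The one point that needs care is the direction of the inequality: Proposition \ref{prop:hspace} supplies only an upper bound on $|\mathcal{H}_B|$, not its exact value. This is benign because the Blumer threshold is monotone increasing in $|\mathcal{H}|$, so any $s$ exceeding the threshold computed from the upper bound also exceeds the threshold computed from the true cardinality, and the derived condition remains sufficient for PAC learning. I expect the only genuine subtlety, rather than an obstacle, to be stating the standard PAC assumptions (an i.i.d.\ example distribution and a consistent learner) explicitly so that the Blumer bound is applicable.
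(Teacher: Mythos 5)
Your proof is correct and takes exactly the route the paper intends: the proposition is obtained by plugging the cardinality bound $|\mathcal{H}_B| \leq (mp^{j+1})^n$ from Proposition \ref{prop:hspace} into the Blumer bound (the finite-hypothesis-class consistent-learner bound, which the paper notes is a reformulation of Lemma 2.1 of Blumer et al.), so that $\ln|\mathcal{H}_B| \leq n\ln m + (j+1)n\ln p$. Your added remarks on the union-bound derivation and on the monotonicity of the threshold in $|\mathcal{H}|$ are sound and merely make explicit what the paper leaves implicit.
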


\noindent
We use these results in the next section to show that forgetting can reduce sample complexity in MIL and in Section \ref{sec:statistical} to define a statistical forgetting method.

\subsection{Forgetting Sample Complexity}

The purpose of forgetting is to reduce the size of the hypothesis space without excluding the target hypothesis.
% As Proposition \ref{prop:hspace} states, given $p$ predicate symbols and $m$ metarules with at most $j$ body literals, the number of programs in the MIL hypothesis space with $n$ clauses is at most $(mp^{j+1})^n$.
In MIL we want to reduce the number of predicate symbols without excluding the target hypothesis from the hypothesis space.
We now show the potential benefits of doing so.
In the next two theorems, assume an ILP input $(B,E^+,E^-)$ and let $B' \subset B$ be reduced background knowledge for $E^+$ and $E^-$, where $B$ and $B'$ both contain $m$ metarules with at most $j$ body literals, $p = |sig(B)|$, $p' = |sig(B')|$, $n$ be a target hypothesis size, and $r = p'/p$, i.e. $r$ is the forgetting \emph{reduction ratio}.

\begin{theorem}[\textbf{Hypothesis space reduction}]
\label{thm:forgetspace}
Forgetting reduces the size of the MIL hypothesis space by a factor of $r^{(j+1)n}$.
\end{theorem}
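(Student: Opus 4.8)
The plan is to apply Proposition \ref{prop:hspace} twice---once to the original background knowledge $B$ and once to the reduced background knowledge $B'$---and then take the ratio of the two resulting bounds. The setup preceding the theorem already fixes everything I need: $B$ and $B'$ share the same number of metarules $m$, the same maximum body-literal count $j$, and we evaluate both at the same target hypothesis size $n$. The \emph{only} quantity that differs between them is the number of predicate symbols, $p = |sig(B)|$ versus $p' = |sig(B')|$, with $r = p'/p$.

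First I would instantiate Proposition \ref{prop:hspace} for $B$ to bound $|\mathcal{H}_B|$ by $(mp^{j+1})^n$, and instantiate it for $B'$ to bound $|\mathcal{H}_{B'}|$ by $(m(p')^{j+1})^n$. Next I would form the quotient of these two bounds and simplify: the factors of $m$ cancel and the outer exponent $n$ distributes, leaving $\left((p')^{j+1}/p^{j+1}\right)^n = (p'/p)^{(j+1)n}$. Substituting the definition $r = p'/p$ yields the claimed factor $r^{(j+1)n}$. Since $B' \subset B$ forces $p' \le p$ and hence $r \le 1$, this factor is at most $1$, so it is genuinely a reduction rather than an increase.

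There is no deep obstacle here; the argument is essentially an algebraic substitution into an existing formula. The one point that warrants care is that Proposition \ref{prop:hspace} gives an \emph{upper bound} on the hypothesis-space size rather than an exact count, so the phrase ``reduces by a factor of $r^{(j+1)n}$'' should be read as the ratio of the two bounds. Because the bound has identical algebraic form for $B$ and $B'$---differing only through the substitution $p \mapsto p'$---this interpretation is clean and the factor comes out exactly. The only modelling assumption being leaned on is that forgetting changes $p$ while leaving $m$, $j$, and $n$ fixed, which is exactly the hypothesis stated before the theorem, so no further justification is required.
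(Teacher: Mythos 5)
Your proposal is correct and matches the paper's proof in essence: the paper simply substitutes $p \mapsto rp$ (i.e.\ $p' = rp$) into Proposition \ref{prop:hspace} and factors out $r^{(j+1)n}$, which is algebraically identical to your instantiate-twice-and-take-the-ratio argument. Your explicit caveat that the reduction factor is a ratio of upper bounds rather than exact counts is a fair reading of what the paper leaves implicit, but it does not change the route.
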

\begin{proof}
Replacing $p$ with $rp$ in Proposition \ref{prop:hspace} and rearranging terms leads to $r^{(j+1)n} (mp^{(j+1)})^n$.
\end{proof}

\begin{theorem}[\textbf{Sample complexity reduction}]
\label{thm:forget:improvement}
Forgetting reduces sample complexity in MIL by $(j+1)n \ln(r)$.
\end{theorem}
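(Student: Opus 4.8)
The plan is to mirror the proof of Theorem~\ref{thm:forgetspace}: rather than re-derive anything, I would treat this as a direct substitution into the sample complexity bound of Proposition~\ref{prop:sampcomp} and simply read off the difference between the pre- and post-forgetting bounds. The guiding observation is that, of the three summands in that bound, only the middle term $(j+1)n\ln(p)$ depends on the number of predicate symbols; the term $n\ln(m)$ depends only on the number of metarules and $\ln\frac{1}{\delta}$ only on the confidence, and forgetting changes neither $m$ nor $\delta$. So the entire effect of forgetting on the bound is localised to that one term.

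Concretely, I would first record the pre-forgetting bound
$$s \geq \frac{1}{\epsilon}\left(n\ln(m) + (j+1)n\ln(p) + \ln\frac{1}{\delta}\right),$$
and then substitute the reduced signature size. Since $r = p'/p$ we have $p' = rp$, so replacing $p$ by $p'$ in the sole $p$-dependent term yields $(j+1)n\ln(rp)$. The key move is the logarithm identity $\ln(rp) = \ln(r) + \ln(p)$, which splits this into $(j+1)n\ln(p) + (j+1)n\ln(r)$. Substituting back and comparing with the original bound shows that the post-forgetting complexity differs from $s$ by precisely the additive term $(j+1)n\ln(r)$. Because $B' \subset B$ forces $p' < p$, we have $r < 1$ and hence $\ln(r) < 0$, confirming that this additive term is negative and the bound genuinely shrinks, i.e. forgetting reduces rather than inflates the sample complexity.

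I do not expect a real obstacle here: as with Theorem~\ref{thm:forgetspace}, the claim is a one-line algebraic consequence of an already-established proposition, so the effort is purely bookkeeping. The only points demanding a little care are (i) isolating $(j+1)n\ln(p)$ as the unique $p$-dependent contribution so that the other summands cancel cleanly, and (ii) tracking the sign of $\ln(r)$ to justify calling the change a reduction. I would also flag the implicit convention that the stated reduction $(j+1)n\ln(r)$ refers to the change in the bracketed complexity term, with the common $\frac{1}{\epsilon}$ prefactor understood, so that the sketched derivation matches the statement exactly.
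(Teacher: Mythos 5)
Your proposal is correct and follows essentially the same route as the paper: the paper's proof likewise substitutes $p \mapsto rp$ into Proposition~\ref{prop:sampcomp} and rearranges (i.e., splits $\ln(rp)$ into $\ln(r)+\ln(p)$) to obtain the bound with the extra additive term $(j+1)n\ln(r)$. Your added remarks on the sign of $\ln(r)$ and the $\frac{1}{\epsilon}$ prefactor convention are harmless elaborations the paper leaves implicit.
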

\begin{proof}
Replacing $p$ with $rp$ in Proposition \ref{prop:sampcomp} and rearranging terms leads to
$s \geq \frac{1}{\epsilon} (n \ln(m) + (j+1)n \ln(r) + (j+1)n \ln(p) + \ln\frac{1}{\delta})$.
\end{proof}

\noindent
These results show that forgetting can substantially reduce the size of the hypothesis space and sample complexity of a learner.

\section{\name{}}
\label{sec:impl}

We now introduce \name{}, a multi-task ILP system which supports forgetting.
% to improve learning performance.
\name{} relies on Metagol \cite{metagol}, a MIL system.
We first briefly describe Metagol.

\subsection{Metagol}
Metagol is based on a Prolog meta-interpreter.
% Metagol takes as input (1) a logic program which represents BK, (2) a set of predicate symbols $S$ which denotes those that may appear in a hypothesis, similar to determinations in Aleph \cite{aleph}, (3) a set of ground atoms which represent positive examples, (4) a set of ground atoms which represent negative examples, and (5) a maximum program size.
Metagol takes as input (1) a logic program which represents BK, (2) a set of predicate symbols $S$ which denotes symbols that may appear in a hypothesis, similar to determinations in Aleph \cite{aleph}, (3) two sets of ground atoms which represent positive and negative examples, and (4) a maximum program size.
The set $S$ is necessary because the BK may contain relations which should not appear in a hypothesis.
For instance, the BK could contain a definition for \emph{quicksort} which uses \emph{partition} and \emph{append} but we may want Metagol to only use \emph{quicksort} in a hypothesis.
The set $S$ is usually defined as part of the BK but we separate it for clarity.

Given these inputs, Metagol tries learn a logic program by proving each atom in the set of positive examples.
Metagol first tries to prove an atom using the BK by delegating the proof to Prolog.
If this step fails, Metagol tries to unify the atom with the head of a metarule and tries to bind the variables in a metarule to symbols in $S$.
Metagol saves the substitutions and tries to prove the body of the metarule recursively through meta-interpretation.
After proving all atoms, Metagol induces a logic program by projecting the substitutions onto the corresponding metarules.
Metagol checks the consistency of the induced program with the negative examples.
If the program is inconsistent, Metagol backtracks to consider alternative programs.
Metagol uses iterative deepening to ensure that the learned program has the minimal number of clauses.
Metagol supports predicate invention by adding $d-1$ new predicate symbols to $S$ at each depth $d$.

% \todo{talk about predicate invention - cite playgol work to show that it improves performance of a multi-task learner}

\subsection{\name{}}
\name{} (Algorithm \ref{alg1}) takes as input (1) a logic program which represents BK, (2) a set of predicate symbols $S$ which denotes those that may appear in a hypothesis, (3) a set of tasks $T$ (a set of pairs of sets of positive and negative examples), (4) a maximum program size, and (5) a forgetting function. % \emph{forget}.
\name{} uses dependent learning \cite{mugg:metabias} to learn programs and to expand $B$ and $S$.
Starting at depth $d=1$, \name{} tries to learn a program for each task using at most $d$ clauses.
To learn a program, \name{} calls Metagol (line 17).
Once \name{} has tried to learn a program for each task at depth $d$, it updates $B$ (line 6) with the learned programs, increases the depth, and tries to learn programs for the remaining tasks (line 8).
\name{} also updates $S$ with learned predicate symbols (line 7), including invented symbols.
% Combining dependent learning with predicate invention can improve learning performance \cite{crop:playgol}.
\name{} repeats this process until it reaches a maximum depth, at which point it returns the learned programs.
In contrast to the approach used by Lin et al, \name{} can also remove (forget) elements from $S$ (line 4).

% \begin{algorithm}[t]
% \begin{myalgorithm}[]
% func $\text{\name{}}$(B,S,T,max$_d$,max$_t$,forget):
%   P = {}
%   for depth=1 to max$_d$:
%     S$'$ = forget(S,B)
%     S$_d$, T$_d$, P$_d$, = learn(B,S$'$,T,depth,max$_t$)
%     B = B $\cup$ P$_d$
%     S = S $\cup$ S$_d$
%     T = T $\setminus$ T$_d$
%     P = P $\cup$ P$_d$
%   return P

% func learn(B,S,T,depth,max$_t$):
%   S$_d$ = {}
%   T$_d$ = {}
%   P$_d$ = {}
%   for (E$^+$,E$^-$) in T:
%     prog = metagol(B,S,E$^+$,E$^-$,depth,max$_t$)
%     if prog != null:
%       S$_d$ = S$_d$ $\cup$ {p | p is the head symbol of a clause $\text{in}$ prog}
%       T$_d$ = T$_d$ $\cup$ {(E$^+$,E$^-$)}
%       P$_d$ = P$_d$ $\cup$ prog
%   return S$_d$, T$_d$, P$_d$
% \end{myalgorithm}
% \caption{\name{}}
% \label{alg1}
% \end{algorithm}

\begin{algorithm}[t]
\begin{myalgorithm}[]
func $\text{forgetgol}$(B,S,T,max$_d$,forget):
  P = {}
  for depth=1 to max$_d$:
    S$'$ = forget(S,B)
    S$_d$, T$_d$, P$_d$, = learn(B,S$'$,T,depth)
    B = B $\cup$ P$_d$
    S = S $\cup$ S$_d$
    T = T $\setminus$ T$_d$
    P = P $\cup$ P$_d$
  return P

func learn(B,S,T,depth):
  S$_d$ = {}
  T$_d$ = {}
  P$_d$ = {}
  for (E$^+$,E$^-$) in T:
    prog = metagol(B,S,E$^+$,E$^-$,depth)
    if prog != null:
      S$_d$ = S$_d$ $\cup$ {p | p is the head symbol of a clause $\text{in}$ prog}
      T$_d$ = T$_d$ $\cup$ {(E$^+$,E$^-$)}
      P$_d$ = P$_d$ $\cup$ prog
  return S$_d$, T$_d$, P$_d$
\end{myalgorithm}
\caption{\name{}}
\label{alg1}
\end{algorithm}

\subsection{Forgetting}
The purpose of forgetting is to reduce the size of the hypothesis space without excluding the target hypothesis.
\name{} reduces the size of the hypothesis space by reducing the number of predicate symbols allowed in a hypothesis (the set $S$ in Algorithm \ref{alg1}).
To be clear: \name{} does not remove clauses from the BK.
\name{} removes predicate symbols from $S$ because, as stated, the number of predicate symbols determines the size of the hypothesis space (Proposition \ref{prop:hspace}).
We consider two forgetting methods: \emph{syntactical} and \emph{statistical}.

\subsubsection{Syntactical Forgetting}
\label{sec:syntactical}

% Our first forgetting method removes syntactically duplicate programs.
% We unfold \cite{ilp:book} each clause in a learned program to replace invented predicates with their definitions (but not when the predicate refers to the head of the clause).
% For instance, suppose we have learned the following program, where \emph{f1} and \emph{f2} are invented predicate symbols:

% \begin{center}
%   \begin{tabular}{l}
%   \emph{f(A,B) $\leftarrow$ f1(A,C), f1(C,D),f2(D,B)}\\
%   \emph{f1(A,B) $\leftarrow$ succ(A,C), succ(C,B)}\\
%   \emph{f2(A,B) $\leftarrow$ double(A,C), double(C,B)}
%   \end{tabular}
% \end{center}

% \noindent
% We replace occurrences of \emph{f1} and \emph{f2} in the body of the first clause with their definition to derive:

% \begin{center}
%   \begin{tabular}{l}
%   \emph{f(A,B) $\leftarrow$ succ(A,C), succ(C,D), succ(D,E),}\\
%   \quad \quad \quad \quad \emph{succ(E,F), double(F,G), double(G,B)}\\

%   \emph{f1(A,B) $\leftarrow$ succ(A,C), succ(C,B)}\\
%   \emph{f2(A,B) $\leftarrow$ double(A,C), double(C,B)}
%   \end{tabular}
% \end{center}

% \noindent
% For each unfolded clause, we check whether there is already a clause with the same head arguments and the same body.
% If so, we forget it; otherwise, we keep the original clause.
% % This forgetting method is lossless because we only remove syntactically duplicate clauses.
% Algorithm \ref{alg:syntactical} shows the code for this method.

Algorithm \ref{alg:syntactical} shows our syntactical forgetting method, which removes syntactically duplicate programs.
% , and is intended as a baseline to compare our statistical method.
We use the Tamaki and Sato unfold operation \cite{unfolding} on each clause in the BK to replace invented predicates with their definitions (but not when the predicate refers to the head of the clause, i.e. when it is used recursively).
% For instance, suppose we have learned the following program, where \emph{f1} and \emph{f2} are invented predicate symbols:
% \begin{center}
%   \begin{tabular}{l}
%   \emph{f(A,B) $\leftarrow$ f1(A,C), f1(C,D),f2(D,B)}\\
%   \emph{f1(A,B) $\leftarrow$ succ(A,C), succ(C,B)}\\
%   \emph{f2(A,B) $\leftarrow$ double(A,C), double(C,B)}
%   \end{tabular}
% \end{center}
% \noindent
% We replace occurrences of \emph{f1} and \emph{f2} in the body of the first clause with their definition to derive:
% \begin{center}
%   \begin{tabular}{l}
%   \emph{f(A,B) $\leftarrow$ succ(A,C), succ(C,D), succ(D,E),}\\
%   \quad \quad \quad \quad \emph{succ(E,F), double(F,G), double(G,B)}\\
%   \emph{f1(A,B) $\leftarrow$ succ(A,C), succ(C,B)}\\
%   \emph{f2(A,B) $\leftarrow$ double(A,C), double(C,B)}
%   \end{tabular}
% \end{center}
% \noindent
For each unfolded clause, we check whether (1) the head symbol of the clause is in S (i.e. is allowed in a hypothesis), and (2) we have already seen a clause with the same head arguments and the same body (line 8).
If so, we forget the head predicate symbol; otherwise, we keep the symbol and add the unfolded clause to the seen set.
% This forgetting method is lossless because we only remove syntactically duplicate clauses.

\begin{algorithm}[t]
\begin{myalgorithm}[]
func forget(S,B):
  S$'$ = {}
  B$'$ = {}
  for clause in B:
    s = head_sym(clause)
    if s in S:
      clause$'$ = unfold(clause,B)
      if clause$'$ is new with respect to B$'$:
        S$'$ = S$'$ $\cup$ {s}
        B$'$ = B$'$ $\cup$ {clause$'$}
  return S$'$
\end{myalgorithm}
\caption{Syntactical forgetting}
\label{alg:syntactical}
\end{algorithm}

\subsubsection{Statistical Forgetting}
\label{sec:statistical}
Statistical forgetting is based on the hypothesis space results in Section \ref{sec:framework}.
Deciding whether to keep or forget a predicate symbol (an element of $S$) in MIL depends on (1) the cost of relearning it, and (2) how likely it is to be reused.
As Proposition \ref{prop:hspace} states, given $p$ predicate symbols and $m$ metarules with at most $j$ body literals, the number of programs expressible with $n$ clauses is at most $(mp^{j+1})^n$.
Table \ref{tab:costs} uses this result to show the costs of saving or forgetting a predicate symbol when it is relevant or irrelevant.
However, we do not know beforehand whether a predicate symbol is relevant to future tasks.
We can, however, estimate relevancy in our multi-task setting using the relative reuse of a symbol, similar to Dechter et al. (\citeyear{dechter:ec}).
Specifically, we define the probability $Pr(s,B)$ that a predicate symbol $s$ is relevant given background knowledge $B$ as:

\[ Pr(s,B) = \frac{|\{ clause \in B | \text{\emph{ s in body of clause }}\}| + 1}{|B| + 1} \]

\begin{table}[ht]
    \centering
    \begin{tabular}{l|ll}
    & Relevant & Irrelevant\\
    \hline
    Keep & $(m(p+1)^{j+1})^{n-k}$ & $(m(p+1)^{j+1})^n$\\
    Forget & $(mp^{j+1})^n$ & $(mp^{j+1})^n$
    \end{tabular}
    \caption{
    The costs of keeping or forgetting a predicate symbol that is the head symbol of a definition composed of $k$ clauses.
    These costs are based on Proposition \ref{prop:hspace}, where $p$ is the number of predicate symbols, $m$ is the number of metarules with at most $j$ body literals, and $n$ is the number of clauses in the target program.
    The $n-k$ exponent when keeping a relevant symbol is because reusing a learned symbol reduces the size of the target program by $k$ clauses.
    }
    \label{tab:costs}
\end{table}

\noindent
The +1 values denote additive smoothing.
With this probability, we define the expected cost \emph{cost\_keep(s,B)} of keeping a symbol $s$ that is the head of $k$ clauses when searching for a program with $n$ clauses:

\begin{center}
\begin{tabular}{rl}
  \emph{cost\_keep(s,B) =} & $(pr(s,B)(m(p+1)^{j+1})^{n-k})$ \\
  & $ + \; ((1-pr(s,B)) (m(p+1)^{j+1})^n)$
\end{tabular}
\end{center}

\noindent
We can likewise define the expected cost \emph{cost\_forget(s,B)} = $(mp^{j+1})^n$ of forgetting $s$.
These costs allow us to define the forget function shown in Algorithm \ref{alg:statistical}.

\begin{algorithm}[t]
\begin{myalgorithm}[]
func forget(S,B):
  S$'$ = {}
  for clause in B:
    s = head_sym(clause)
    if s in S and cost_forget(s,B) > cost_keep(s,B):
      S$'$ = S$'$ $\cup$ {s}
  return S$'$
\end{myalgorithm}
\caption{Statistical forgetting}
\label{alg:statistical}
\end{algorithm}

% unfolding
% deduplication
\section{Experiments}
\label{sec:experiments}

% To test our claim that forgetting can improve learning performance, our experiments\footnote{

%     % Data available at https://github.com/andrewcropper/aaai20-forgetgol
% } aim to answer the question:

To test our claim that forgetting can improve learning performance, our experiments aim to answer the question:
\begin{description}
\item[Q1] Can forgetting improve learning performance?
\end{description}

\noindent
To answer \textbf{Q1} we compare three variations of \name{}:

\begin{itemize}
    % \item \textbf{Syntactical}: \name{} with syntactical forgetting
    \item \textbf{\syn{}}: \name{} with syntactical forgetting
    \item \textbf{\stat{}}: \name{} with statistical forgetting
    % \item \textbf{Statistical}: \name{} with statistical forgetting
    \item \textbf{Metabias}: \name{} set to remember everything (we call Algorithm \ref{alg1} with a forget function that returns all the BK), which is equivalent to approach used in \cite{mugg:metabias}
\end{itemize}

\noindent
We introduced forgetting because we claim that learners which remember everything will become overwhelmed by too much BK when learning from many tasks.
To test this claim, our experiments aim to answer the question:

\begin{description}
% \item[Q2] Does the performance of a multi-task learner only improve given more tasks?
\item[Q2] Do remember everything learners become overwhelmed by too much BK when learning from many tasks?
\end{description}

\noindent
To answer \textbf{Q2}, we compare the performance of the learners on progressively more tasks.
% We expect that (1) Metabias will improve given more tasks but will eventually become overwhelmed by too much BK, at which point its performance will start to degrade, and (2) \name{} will improve given more tasks and should outperform Metabias when given many tasks because it can forget BK.
We expect that Metabias will improve given more tasks but will eventually become overwhelmed by too much BK, at which point its performance will start to degrade.
By contrast, we expect that \name{} will improve given more tasks and should outperform Metabias when given many tasks because it can forget BK.

We also compare \name{} and Metabias against Metagol.
However, this comparison cannot help us answer questions \textbf{Q1} and \textbf{Q2}, which is also why we do not compare \name{} against other program induction systems.
The purpose of this comparison is to add more experimental evidence to the results of Lin et al. (\citeyear{mugg:metabias}), where they compare multi-task learning with single-task learning.
We expect that Metagol will not improve given more tasks because it cannot reuse learned programs.

% All our experimental data are available at https://git.io/Jerjj.
All the experimental data are available at https://github.com/andrewcropper/aaai20-forgetgol.

\subsection{Experiment 1 - Robot Planning}

Our first experiment is on learning robot plans.

\subsubsection{Materials}
A robot and a ball are in a $6 \times 6$ space.
A state describes the position of the robot, the ball, and whether the robot is holding the ball.
A training example is an atom $f(s_1,s_2)$, where $f$ is the target predicate and $s_1$ and $s_2$ are initial and final states respectively.
The task is to learn a program to move from the initial to the final state.
We generate training examples by generating random states, with the constraint that the robot can only hold the ball if it is in the same position as the ball.
% Figure \ref{fig:robot-states} shows an training example (an atom).
% An example atom is \emph{f(state(0/0,0/0,false),state(2/2,2/2,false))}.
The robot can perform the actions \emph{up}, \emph{down}, \emph{right}, \emph{left}, \emph{grab}, and \emph{drop}.
% The learners can use the \emph{ident} and \emph{chain} metarules (Table \ref{tab:metarules}).
The learners use the metarules in Table \ref{tab:metarules}.
% \begin{figure}[ht]
% \centering
% \begin{subfigure}{.5\linewidth}
% \centering
% \includegraphics[scale=0.8]{figs/fig-robot-figure0}
% \caption{Initial state}
% \end{subfigure}%
% \begin{subfigure}{.5\linewidth}
% \includegraphics[scale=0.8]{figs/fig-robot-figure1}
% \caption{Final state}
% \end{subfigure}
% \caption{Robot planning example which corresponds to the atom \emph{f(state(0/0,0/0,false),state(2/2,2/2,false))}}
% \label{fig:robot-states}
% \end{figure}

\subsubsection{Method}
Our dataset contains $n$ tasks for each $n$ in $\{2000,4000,\dots,20000\}$.
Each task is a one-shot learning task and is given a unique predicate symbol.
We enforce a timeout of 60 seconds per task per search depth.
We set the maximum program size to 6 clauses.
We measure the percentage of tasks solved (tasks where the learner learned a program) and learning times.
We plot the standard error of the means over 5 repetitions.

\subsubsection{Results}
\label{sec:robores}
Figure \ref{fig:robot-results-small} shows the results on the small dataset.
As expected, Metagol's performance does not vary when given more tasks.
By contrast, \name{} and Metabias continue to improve given more tasks, both in terms of percentage of tasks solved and learning time.
% all the learners (except Metagol)
Figure \ref{fig:robot-results-big} shows the results on the big dataset.
We did not run Metagol on the big dataset because the results on the small dataset are sufficiently conclusive.
On the big dataset, the performances of Metabias and \stat{} start to degrade when given more than 8000 tasks.
However, this performance decrease is tiny ($<$1\%).
% Note that the dip in performance of Metabias and \stat{} between 8,000 - 16,000 tasks is insignificant, i.e. there is too much standard error for it to be meaningful.
By contrast, \syn{} always solves 100\% of the tasks and by 20,000 tasks learns programs twice as quick as Metabias and \stat{}.

% As expected, Metagol's performance does not vary when given more tasks.
These results are somewhat unexpected.
We thought that Metabias would eventually become overwhelmed by too much BK, at which point its performance would start to degrade.
Metabias does not strongly exhibit this behaviour (the performance decrease is tiny) and performs well despite large amounts of BK.
% \todo{give numbers}.
After analysing the experimental results, Metabias rarely has to learn a program with more than two clauses because it could almost always reuse a learned program.
% In other words, Metabias performed well because of the large dependence between tasks.
% \todo{unsure of any ILP system able to handle such large amounts of BK}
The performance of \stat{} is similar to Metabias because \stat{} rarely forgets anything, which suggests limitations with the proposed statistical forgetting method.

Overall, these results suggest that the answer to \textbf{Q1} is yes and the answer to \textbf{Q2} is no.

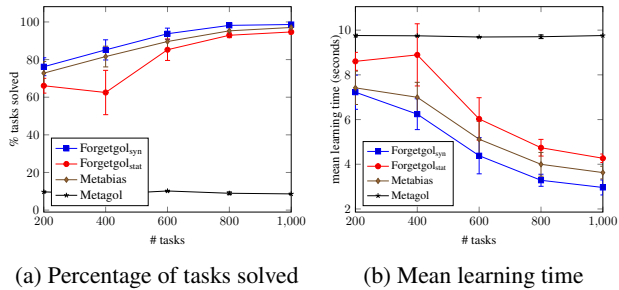
\begin{figure}[ht]
\centering
\begin{subfigure}{.5\linewidth}
\centering
\pgfplotsset{scaled x ticks=false}
\begin{tikzpicture}[scale=.48]
    \begin{axis}[
    xlabel=\# tasks,
    ylabel=\% tasks solved,
    xmin=200,xmax=1000,
    ylabel style={yshift=-4mm},
    legend style={legend pos=south west,style={nodes={right}}}
    ]

% syntactical
\addplot+[mark=square*,error bars/.cd,y fixed,y dir=both,y explicit] coordinates {
(200,76.2) +- (0,4.7502631506054485)
(400,85.15) +- (0,5.351752049562835)
(600,93.73333333333333) +- (0,2.975175064877133)
(800,98.175) +- (0,0.16583123951776998)
(1000,98.64000000000001) +- (0,0.3841874542459704)
};

% statistical
\addplot+[mark=oplus*,error bars/.cd,y fixed,y dir=both,y explicit] coordinates {
(200,66.1) +- (0,3.9350984739902)
(400,62.5) +- (0,11.778688381988887)
(600,85.23333333333332) +- (0,5.694197826637998)
(800,92.925) +- (0,1.2652815892124567)
(1000,94.67999999999999) +- (0,1.1015443704181844)
};

% metabias
\addplot+[mark=diamond*,error bars/.cd,y fixed,y dir=both,y explicit] coordinates {
(200,72.8) +- (0,6.224146527838173)
(400,81.6) +- (0,5.5003408985262)
(600,89.6) +- (0,3.785425265996469)
(800,95.275) +- (0,2.0222975794872524)
(1000,97.05999999999999) +- (0,0.7991245209602821)
};

% metagol
\addplot+[error bars/.cd,y fixed,y dir=both,y explicit] coordinates {
(200,9.7) +- (0,0.40620192023179796)
(400,8.45) +- (0,0.3984344362627308)
(600,10.166666666666666) +- (0,0.3374742788552763)
(800,8.95) +- (0,0.7527034608662297)
(1000,8.620000000000001) +- (0,0.32155870381627055)
};

    \legend{\syn{},\stat{},Metabias,Metagol}
    \end{axis}
  \end{tikzpicture}
\caption{Percentage of tasks solved}
% \label{fig:robo5res}
\end{subfigure}%
\begin{subfigure}{.5\linewidth}
\centering
\pgfplotsset{scaled x ticks=false}
\begin{tikzpicture}[scale=.48]
    \begin{axis}[
    xlabel=\# tasks,
    ylabel=mean learning time (seconds),
    xmin=200,xmax=1000,
    ylabel style={yshift=-7mm},
    legend style={legend pos=south west,font=\small,style={nodes={right}}}
    ]

% syntactical
\addplot+[mark=square*,error bars/.cd,y fixed,y dir=both,y explicit] coordinates {
(200,7.223253586769104) +- (0,0.7652005292661443)
(400,6.240336181759834) +- (0,0.6898511917844393)
(600,4.38307755335172) +- (0,0.8107558440890233)
(800,3.2833407340645793) +- (0,0.26928366401203985)
(1000,2.96140545539856) +- (0,0.33463864605737903)
};

% statistical
\addplot+[mark=oplus*,error bars/.cd,y fixed,y dir=both,y explicit] coordinates {
(200,8.603900444507598) +- (0,0.40189760571376165)
(400,8.893593466520308) +- (0,1.3910501844874323)
(600,6.027590248187383) +- (0,0.9495290686162104)
(800,4.739445650935173) +- (0,0.37314974182698424)
(1000,4.268552606868743) +- (0,0.18974175860217038)
};

% metabias
\addplot+[mark=diamond*,error bars/.cd,y fixed,y dir=both,y explicit] coordinates {
(200,7.418438065052032) +- (0,0.744844386629517)
(400,6.995693395376206) +- (0,0.6697459691988723)
(600,5.120112765550613) +- (0,0.8041587902576252)
(800,3.997431905031205) +- (0,0.5317494647742028)
(1000,3.6275448770999907) +- (0,0.28303090572800227)
};

% metagol
\addplot+[error bars/.cd,y fixed,y dir=both,y explicit] coordinates {
(200,9.759010304927827) +- (0,0.05762496128492244)
(400,9.746609765529632) +- (0,0.019157029484439477)
(600,9.692933224042259) +- (0,0.01955825220712204)
(800,9.70545234155655) +- (0,0.08317288636390827)
(1000,9.757821498775481) +- (0,0.010339891057002988)
};

    \legend{\syn{},\stat{},Metabias,Metagol}
    \end{axis}
  \end{tikzpicture}
\caption{Mean learning time}
% \label{fig:robo6res}
\end{subfigure}
\caption{Robot experiment small dataset results.}
\label{fig:robot-results-small}
\end{figure}

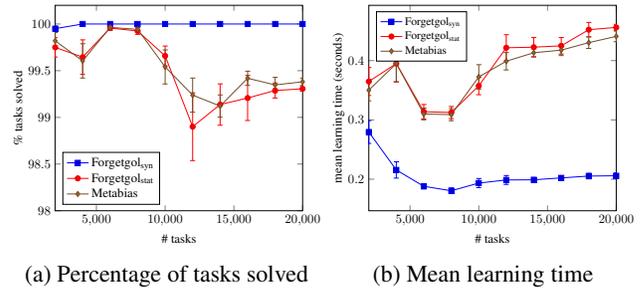
\begin{figure}[ht]
\centering
\begin{subfigure}{.5\linewidth}
\centering
\pgfplotsset{scaled x ticks=false}
\begin{tikzpicture}[scale=.48]
    \begin{axis}[
    xlabel=\# tasks,
    ylabel=\% tasks solved,
    xmin=2000,xmax=20000,
    ymin=98,
    ylabel style={yshift=-2mm},
    legend style={legend pos=south west,style={nodes={right}}}
    ]

% syntactical
\addplot+[mark=square*,error bars/.cd,y fixed,y dir=both,y explicit] coordinates {
% (2000,99.92999999999999) +- (0,0.019999999999998862)
% (4000,100.0) +- (0,0.0)
% (6000,100.0) +- (0,0.0)
% (8000,100.0) +- (0,0.0)
% (10000,100.0) +- (0,0.0)
% (12000,100.0) +- (0,0.0)
% (14000,100.0) +- (0,0.0)
% (16000,99.9975) +- (0,0.002500000000000568)
% (18000,100.0) +- (0,0.0)
% (20000,100.0) +- (0,0.0)
(2000,99.95) +- (0,0.02738612787525675)
(4000,100.0) +- (0,0.0)
(6000,100.0) +- (0,0.0)
(8000,100.0) +- (0,0.0)
(10000,100.0) +- (0,0.0)
(12000,100.0) +- (0,0.0)
(14000,100.0) +- (0,0.0)
(16000,100.0) +- (0,0.0)
(18000,100.0) +- (0,0.0)
(20000,100.0) +- (0,0.0)
};

% statistical
\addplot+[mark=oplus*,error bars/.cd,y fixed,y dir=both,y explicit] coordinates {
% (2000,99.85) +- (0,0.06892024376045235)
% (4000,99.795) +- (0,0.10198039027185665)
% (6000,99.90333333333334) +- (0,0.017795130420049977)
% (8000,99.9075) +- (0,0.0302592960922754)
% (10000,99.61599999999999) +- (0,0.13399999999999934)
% (12000,98.75333333333333) +- (0,0.5768484780829923)
% (14000,98.92142857142856) +- (0,0.23259845402823728)
% (16000,99.375) +- (0,0.06465050270492936)
% (18000,99.15777777777777) +- (0,0.05370817350491847)
% (20000,99.18400000000001) +- (0,0.08453401682163281)
(2000,99.75) +- (0,0.10368220676663818)
(4000,99.64500000000001) +- (0,0.18513508581573554)
(6000,99.95666666666666) +- (0,0.023333333333332002)
(8000,99.92500000000001) +- (0,0.036443449342785074)
(10000,99.65799999999999) +- (0,0.10608487168300851)
(12000,98.9) +- (0,0.36271545321367216)
(14000,99.13714285714285) +- (0,0.22141013748153526)
(16000,99.20750000000001) +- (0,0.24119980047670075)
(18000,99.28777777777778) +- (0,0.08063023358456417)
(20000,99.306) +- (0,0.025612496949731178)

};

% metabias
\addplot+[mark=diamond*,error bars/.cd,y fixed,y dir=both,y explicit] coordinates {
% (2000,99.27) +- (0,0.3597221149721016)
% (4000,99.4) +- (0,0.31304951684997145)
% (6000,99.31333333333333) +- (0,0.6367757323684167)
% (8000,98.13000000000002) +- (0,1.6965525706561544)
% (10000,97.49600000000001) +- (0,2.0887594404334826)
% (12000,96.63666666666666) +- (0,2.842703378827977)
% (14000,96.85428571428572) +- (0,2.2290940321309063)
% (16000,99.35375000000002) +- (0,0.11737027413276355)
% (18000,99.27666666666667) +- (0,0.09742676895310513)
% (20000,99.27000000000001) +- (0,0.04424364361125634)
(2000,99.82) +- (0,0.08888194417315651)
(4000,99.60499999999999) +- (0,0.18377975949489056)
(6000,99.96333333333334) +- (0,0.032659863237110016)
(8000,99.94250000000001) +- (0,0.026983791431153047)
(10000,99.53999999999999) +- (0,0.18215378118501954)
(12000,99.23833333333333) +- (0,0.18315521650592756)
(14000,99.12142857142858) +- (0,0.11918087779779701)
(16000,99.41875) +- (0,0.07569985964319755)
(18000,99.34777777777778) +- (0,0.08109208300398789)
(20000,99.381) +- (0,0.03973034105063746)
};

    \legend{\syn{},\stat{},Metabias}
    \end{axis}
  \end{tikzpicture}
\caption{Percentage of tasks solved}
% \label{fig:robo5res}
\end{subfigure}%
\begin{subfigure}{.5\linewidth}
\centering
\pgfplotsset{scaled x ticks=false}
\begin{tikzpicture}[scale=.48]
    \begin{axis}[
    xlabel=\# tasks,
    ylabel=mean learning time (seconds),
    xmin=2000,xmax=20000,
    ylabel style={yshift=-4mm},
    legend style={legend pos=north west,font=\small,style={nodes={right}}}
    ]

% syntactical
\addplot+[mark=square*,error bars/.cd,y fixed,y dir=both,y explicit] coordinates {
% (2000,1.7999371919234595) +- (0,0.18344020968058977)
% (4000,1.4795163529515267) +- (0,0.015319200466996896)
% (6000,1.4135978060695862) +- (0,0.009622708433274084)
% (8000,1.4286918525397778) +- (0,0.02028373640887405)
% (10000,1.57343659927845) +- (0,0.05378647646246418)
% (12000,1.4970349417395064) +- (0,0.08583803088148487)
% (14000,1.4996766795771463) +- (0,0.04180580404371374)
% (16000,1.5813713453809422) +- (0,0.0022835780885921207)
% (18000,1.6031393310670499) +- (0,0.08292131703494152)
% (20000,1.6777649660587308) +- (0,0.03230226999690334)
(2000,0.27958666846752167) +- (0,0.01923544033511322)
(4000,0.21560370725393296) +- (0,0.01374113139124029)
(6000,0.18799851059913636) +- (0,0.004319479825834205)
(8000,0.18046205079555513) +- (0,0.004829049649984669)
(10000,0.19346866295814516) +- (0,0.007676066713505579)
(12000,0.19868902080059053) +- (0,0.007496728862401586)
(14000,0.1989640835727964) +- (0,0.00381876384467704)
(16000,0.2021957015186548) +- (0,0.0011849363102645594)
(18000,0.2052548232369953) +- (0,0.00470199615175644)
(20000,0.2057819483566284) +- (0,0.004749483035999561)
};

% statistical
\addplot+[mark=oplus*,error bars/.cd,y fixed,y dir=both,y explicit] coordinates {
% (2000,2.451776102662086) +- (0,0.23901930814192307)
% (4000,2.5661112510561943) +- (0,0.2510210132944404)
% (6000,2.226683157682419) +- (0,0.03342123511271645)
% (8000,2.286148947268724) +- (0,0.23786765262864756)
% (10000,2.8583590895414352) +- (0,0.23958273870760274)
% (12000,3.0039428650471898) +- (0,0.14929390094251455)
% (14000,3.2599532802332014) +- (0,0.06190906925736674)
% (16000,3.067283611769478) +- (0,0.0851780941245582)
% (18000,3.3440928454752323) +- (0,0.12188587504198675)
% (20000,3.4297727641940114) +- (0,0.041081685855804635)
(2000,0.36477195219993586) +- (0,0.02355995983066055)
(4000,0.39500249944925303) +- (0,0.031075228724662705)
(6000,0.3137922222693761) +- (0,0.012200793706958247)
(8000,0.31243912940621377) +- (0,0.010232883230100154)
(10000,0.35741282403945923) +- (0,0.014933870142622596)
(12000,0.4216530526002248) +- (0,0.022000991742844193)
(14000,0.4226379989385604) +- (0,0.016445625529271198)
(16000,0.4249608181804418) +- (0,0.01396899503654676)
(18000,0.45215285432338714) +- (0,0.012205025812162817)
(20000,0.4559692902708054) +- (0,0.0051090068005914275)
};

% metabias
\addplot+[mark=diamond*,error bars/.cd,y fixed,y dir=both,y explicit] coordinates {
% (2000,2.504782814502716) +- (0,0.5694016939479523)
% (4000,2.544745787906647) +- (0,0.45709774214681526)
% (6000,2.4109691762208936) +- (0,0.620518578708992)
% (8000,2.6642636741220955) +- (0,0.7987490796958302)
% (10000,3.2357468382263184) +- (0,1.0185129117939211)
% (12000,3.4174536497513452) +- (0,1.179371499655494)
% (14000,3.6146256306035185) +- (0,1.0569112006594208)
% (16000,2.323265006893873) +- (0,0.5193532162660986)
% (18000,2.4531164485587014) +- (0,0.5259326986420517)
% (20000,2.585812854783535) +- (0,0.5776826255870142)
(2000,0.35017465243339535) +- (0,0.018028649250924456)
(4000,0.395049249112606) +- (0,0.030545457433872346)
(6000,0.3100003456274668) +- (0,0.009886867762732995)
(8000,0.30884242256879807) +- (0,0.010195964639871652)
(10000,0.3725530971670151) +- (0,0.02058435729301919)
(12000,0.39898387111822764) +- (0,0.014733386329972566)
(14000,0.4132919970648629) +- (0,0.0067453882570442285)
(16000,0.41770099714696407) +- (0,0.009174563783062507)
(18000,0.43031280159420443) +- (0,0.0097718955322369)
(20000,0.4408841947579384) +- (0,0.008935486016313655)
};
    \legend{\syn{},\stat{},Metabias}
    \end{axis}
  \end{tikzpicture}
\caption{Mean learning time}
% \label{fig:robo6res}
\end{subfigure}
% \caption{Robot experiment results.}
\caption{Robot experiment big dataset results.}
\label{fig:robot-results-big}
\end{figure}
\subsection{Experiment 2 - Lego}

Our second experiment is on building Lego structures.
% We consider building simple Lego structures.

\definecolor{pixel 0}{HTML}{FFFFFF}
\definecolor{pixel 1}{HTML}{FF0000} % red

% \begin{figure}[ht]
% \center
% \def\pixelsa{
%   {0,0,0,0,0,1},
%   {0,0,0,0,1,1},
%   {0,0,0,1,1,1},
%   {0,0,1,1,1,1},
%   {0,1,1,1,1,1},
%   {1,1,1,1,1,1}%
% }
% \def\pixelsb{
%   {1,0,0,0,0,1},
%   {1,0,0,0,0,1},
%   {1,0,0,0,0,1},
%   {1,0,0,0,0,1},
%   {1,0,1,1,0,1},
%   {1,1,1,1,1,1}%
% }
% \begin{subfigure}{.5\linewidth}
% \centering
% \begin{tikzpicture}[scale=0.35]
%   \foreach \line [count=\y] in \pixelsa {
%     \foreach \pix [count=\x] in \line {
%       % \draw[fill=pixel \pix] (\x,-\y) rectangle +(1,1);
%         \ifthenelse{\pix = 0}
%         {}
%         {\draw[fill=red] (\x,-\y) rectangle +(1,1);}

%     }
%   }
% \end{tikzpicture}
% \caption{}
% \end{subfigure}%
% \begin{subfigure}{.5\linewidth}
% \centering
% \begin{tikzpicture}[scale=0.35]
%   \foreach \line [count=\y] in \pixelsb {
%     \foreach \pix [count=\x] in \line {
%       % \draw[fill=pixel \pix] (\x,-\y) rectangle +(1,1);
%       \ifthenelse{\pix = 0}
%         {}
%         {\draw[fill=red] (\x,-\y) rectangle +(1,1);}
%     }
%   }
% \end{tikzpicture}
% \caption{}
% \end{subfigure}
% \caption{Lego state illustrations corresponding to \emph{[1,2,3,4,5,6]} (a) and \emph{[6,1,2,2,1,6]} (b).}
% \label{fig:lego}
% \end{figure}

\subsubsection{Materials}
We consider a Lego world with dimensionality $6 \times 1$.
For simplicity we only consider $1 \times 1$ blocks of a single colour.
A training example is an atom $f(s_1,s_2)$, where $f$ is the target predicate and $s_1$ and $s_2$ are initial and final states respectively.
A state describes the Lego board as a list of integers.
The value $k$ at index $i$ denotes that there are $k$ blocks stacked at position $i$.
The goal is to learn a program to build the Lego structure from a blank Lego board (a list of zeros).
We generate training examples by generating random final states.
The learner can move along the board using the actions \emph{left} and \emph{right}; can place a Lego block using the action \emph{place\_block}; and can use the fluents \emph{at\_left} and \emph{at\_right} and their negations to determine whether it is at the leftmost or rightmost board position.
% The learners can use all the metarules in Table \ref{tab:metarules}.
The learners use the metarules in Table \ref{tab:metarules}.

\subsubsection{Method}
The method is the same as in experiment 1, except we only use a big dataset.

\subsubsection{Results}
% The results on the small dataset (Figure \ref{fig:lego-results-small}) show that all the learners except Metagol continue to improve given more tasks around 800 tasks, at which the performance starts to drop off.
The results (Figure \ref{fig:lego-results-big}) show that the performance of Metabias substantially worsens after 12,000 tasks.
By 20,000 tasks Metabias can only solve around 80\% of the tasks.
By contrast, \syn{} always solves 100\% of the tasks and by 20,000 tasks can learn programs three times quicker than Metabias.
The performance of \stat{} is again similar to Metabias.
These results strongly suggest that the answers to \textbf{Q1} and \textbf{Q2} are both yes.

\begin{figure}[ht]
\centering
\begin{subfigure}{.5\linewidth}
\centering
\pgfplotsset{scaled x ticks=false}
\begin{tikzpicture}[scale=.48]
    \begin{axis}[
    xlabel=\# tasks,
    ylabel=\% tasks solved,
    xmin=2000,xmax=20000,
    ylabel style={yshift=-2mm},
    legend style={legend pos=south west,style={nodes={right}}}
    ]

% syntactical
\addplot+[mark=square*,error bars/.cd,y fixed,y dir=both,y explicit] coordinates {
(2000,97.86666666666667) +- (0,1.1454014337534446)
(4000,99.49166666666667) +- (0,0.2550871310835648)
(6000,99.91111111111111) +- (0,0.08069910581296653)
(8000,99.97500000000001) +- (0,0.025000000000000952)
(10000,99.95) +- (0,0.0404145188432728)
(12000,99.99166666666667) +- (0,0.004811252243246608)
(14000,99.96904761904761) +- (0,0.012598815766969945)
(16000,100.0) +- (0,0.0)
(18000,99.99074074074075) +- (0,0.00925925925926189)
(20000,100.0) +- (0,0.0)
};

% statistical
\addplot+[mark=oplus*,error bars/.cd,y fixed,y dir=both,y explicit] coordinates {
(2000,97.91666666666667) +- (0,1.0536971945382503)
(4000,98.375) +- (0,0.5928392137277498)
(6000,98.37222222222222) +- (0,0.6472401046647344)
(8000,98.17916666666667) +- (0,0.6926072921296111)
(10000,94.37333333333333) +- (0,2.5583349619973137)
(12000,97.04166666666667) +- (0,2.0158630170980643)
(14000,87.57380952380953) +- (0,1.4750437174043223)
(16000,90.4375) +- (0,4.340332888539002)
(18000,90.27037037037037) +- (0,3.5876679687572346)
(20000,89.98833333333334) +- (0,7.029841945433607)

};

% metabias
\addplot+[mark=diamond*,error bars/.cd,y fixed,y dir=both,y explicit] coordinates {
(2000,98.43333333333334) +- (0,0.7595685910070586)
(4000,98.02499999999999) +- (0,0.7649891066762559)
(6000,97.66666666666667) +- (0,1.4173854385739537)
(8000,96.92083333333333) +- (0,1.8609240163006235)
(10000,96.64666666666666) +- (0,1.1214474773454346)
(12000,98.20277777777778) +- (0,1.014276942481382)
(14000,88.54047619047621) +- (0,4.295766649983578)
(16000,91.80208333333333) +- (0,4.158640185739139)
(18000,91.92962962962963) +- (0,3.32202763788716)
(20000,83.90833333333335) +- (0,0.2633491556428037)
};

    \legend{\syn{},\stat{},Metabias}
    \end{axis}
  \end{tikzpicture}
\caption{Percentage of tasks solved}
% \label{fig:robo5res}
\end{subfigure}%
\begin{subfigure}{.5\linewidth}
\centering
\pgfplotsset{scaled x ticks=false}
\begin{tikzpicture}[scale=.48]
    \begin{axis}[
    xlabel=\# tasks,
    ylabel=mean learning time (seconds),
    xmin=2000,xmax=20000,
    ylabel style={yshift=-4mm},
    legend style={legend pos=north west,font=\small,style={nodes={right}}}
    ]

% syntactical
\addplot+[mark=square*,error bars/.cd,y fixed,y dir=both,y explicit] coordinates {
(2000,3.065267617464066) +- (0,0.8447663296353122)
(4000,2.347641843934854) +- (0,0.1984833342434616)
(6000,2.35791757663091) +- (0,0.792091121817691)
(8000,1.8669702055156232) +- (0,0.1937798833661376)
(10000,1.5813738946119944) +- (0,0.014756719303078257)
(12000,1.7267290685110623) +- (0,0.12358051741962409)
(14000,1.639530023177465) +- (0,0.0316795646876658)
(16000,1.74699826695025) +- (0,0.12964593600323238)
(18000,1.6428833641652707) +- (0,0.03248735620763329)
(20000,1.6697067985574403) +- (0,0.004183196533750479)
};

% statistical
\addplot+[mark=oplus*,error bars/.cd,y fixed,y dir=both,y explicit] coordinates {
(2000,3.290312786022822) +- (0,0.956247072942176)
(4000,3.1295050567388536) +- (0,0.018191134759485056)
(6000,3.4724959891637166) +- (0,0.6517439105019481)
(8000,2.9485256312489514) +- (0,0.10118694649729221)
(10000,3.5657449848254523) +- (0,0.62554141796634)
(12000,2.9227250354886056) +- (0,0.4427068357183998)
(14000,5.3719575954618906) +- (0,0.3146851438754155)
(16000,4.510564446965853) +- (0,0.7109589737117236)
(18000,5.117007283016488) +- (0,0.6120982680957212)
(20000,5.758246979709466) +- (0,0.954240902580943)
};

% metabias
\addplot+[mark=diamond*,error bars/.cd,y fixed,y dir=both,y explicit] coordinates {
(2000,3.320065630237261) +- (0,0.8671591161814515)
(4000,3.303241877575715) +- (0,0.12877129994311728)
(6000,3.4409144786198933) +- (0,0.6666235645274751)
(8000,3.137581724921862) +- (0,0.3452646226717062)
(10000,3.071264066267014) +- (0,0.1335065696426777)
(12000,2.736786066492398) +- (0,0.21989826009647384)
(14000,5.109006874759992) +- (0,0.1869565652809308)
(16000,4.484368184948961) +- (0,0.7847531836755965)
(18000,4.79518017898224) +- (0,0.589397114281534)
(20000,5.942001418284575) +- (0,0.47036412633055874)
};

    \legend{\syn{},\stat{},Metabias}
    \end{axis}
  \end{tikzpicture}
\caption{Mean learning time}
% \label{fig:robo6res}
\end{subfigure}
% \caption{lego experiment results.}
\caption{Lego experiment results.}
\label{fig:lego-results-big}
\end{figure}
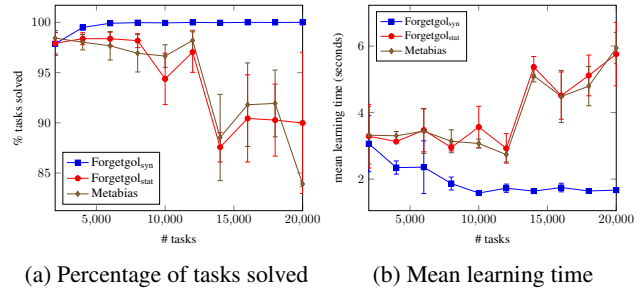

\section{Conclusions and Limitations}
We have explored the idea of \emph{forgetting}.
In this approach, a program induction system learns programs over time and is able to revise its BK (and thus its hypothesis space) by adding and removing (forgetting) learned programs.
% We claimed that forgetting can improve learning performance.
Our theoretical results show that forgetting can substantially reduce the size of the hypothesis space (Theorem \ref{thm:forgetspace}) and the sample complexity (Theorem \ref{thm:forget:improvement}) of a learner .
% We have also shown that optimal forgetting is NP-hard (Theorem \ref{thm:nphard}).
We implemented our idea in \name{}, a new ILP learner.
We described two forgetting techniques: syntactical and statistical.
Our experimental results on two domains show that (1) forgetting can substantially improve learning performance (answers yes to \textbf{Q1}), and (2) remember everything approaches become overwhelmed by too much BK (answers yes to \textbf{Q2}).
% , and (3)
% Our experimental results also show that \name{} and Metabias have a surprising ability to handle large amounts of BK.
% In addition, to our knowledge, this experiment is the first to consider the performance of a learner on over 10,000 tasks, and is the first demonstration of a program induction approach able to handle thousands of distinct BK relations.
% Our results are directly transferable to any program induction approach.

\subsection{Limitations and Future Work}
There are many limitations to this work, including (1) only considering two domains, (2) only using \name{} with Metagol, and (3) not evaluating the impact of forgetting on predictive accuracy.
However, future work can easily address these minor limitations.
% , such as by replacing Metagol in \name{} with ILASP \cite{law:ilasp}.

% In future work, we want to relax the need in the forgetting problem to learn the optimal hypothesis (Definition \ref{def:forgetprob}).
% It may, in some cases, be more efficient to forget a lot of BK as to efficiently learn a non-optimal solution.

% We have \todo{extrapolate the results to show that it is an important problem}

The main limitation of this work, and thus the main topic for future research, is our forgetting methods.
% The main topic for future research is to improve on our forgetting methods.
Syntactical forgetting achieves the best performance, but if every learned program is syntactically unique, then this method will forget nothing.
Future work can address this limitation by exploring semantic forgetting methods, such as those based on subsumption \cite{plotkin:thesis} or derivability \cite{crop:reduce}.
Statistical forgetting, based on expected search cost, does not perform better than Metabias (which remembers everything).
Statistical forgetting rarely forgets programs because the expected search cost is dominated by the target program size. %(Section \ref{sec:statistical}).
Future work could improve this method by adding a \emph{frugal} dampening factor to force \name{} to forget more.
We suspect that another limitation with our forgetting methods is that they will be sensitive to concept drift \cite{DBLP:journals/ml/WidmerK96}.
For instance, if we first trained \name{} on the robot domain and then on the Lego domain, we would want it to revise its BK accordingly.
Future work could address this limitation by adding an exponential decay factor to forget programs that have not recently been used.

% As discussed in Section \ref{sec:robores}, statistical forgetting rarely forgets programs because the expected search cost is dominated by the exponent based on the target program size. %(Section \ref{sec:statistical}).
% As discussed in Section \ref{sec:robores}, statistical forgetting rarely forgets programs because the expected search cost is dominated by the exponent based on the target program size. %(Section \ref{sec:statistical}).
% After analysing the experimental results, statistical forgetting almost never forgets programs because the expected search cost is dominated by the exponent based on the target program size (Section \ref{sec:statistical}).
To conclude, we think that forgetting is an important contribution to lifelong machine learning and that our work will stimulate future research into better forgetting methods.
% To conclude, we think that forgetting is an important contribution to lifelong machine learning and that our work will stimulate future research into better forgetting methods.

{
  \small

\bibliographystyle{aaai}
\bibliography{mybib}
}
\end{document}